\title{Distributionally Robust Counterfactual Risk Minimization}
\author{Louis Faury\thanks{Equal contribution}  \\ Criteo AI Labs\\LTCI, Telecom ParisTech\\
  \texttt{l.faury@criteo.com} \And Ugo
  Tanielian$^*$\\ Criteo AI Labs \\LPSM, Universit\'e Paris 6\\
  \texttt{u.tanielian@criteo.com} \AND Flavian Vasile \\ Criteo AI Labs \\
  \texttt{f.vasile@criteo.com}
  \And Elena Smirnova \\ Criteo AI Labs \\ \texttt{e.smirnova@criteo.com} \And Elvis Dohmatob \\ Criteo AI Labs \\
  \texttt{e.dohmatob@criteo.com}
  }
\begin{document}

\maketitle

\begin{abstract}
This manuscript introduces the idea of using Distributionally Robust Optimization (DRO) for the Counterfactual Risk Minimization (CRM) problem. Tapping into a rich existing literature, we show that DRO is a principled tool for counterfactual decision making. We also show that well-established solutions to the CRM problem like sample variance penalization schemes are special instances of a more general DRO problem. In this unifying framework, a variety of distributionally robust counterfactual risk estimators can be constructed using various probability distances and divergences as uncertainty measures. We propose the use of Kullback-Leibler divergence as an alternative way to model uncertainty in CRM and derive a new robust counterfactual objective. In our experiments, we show that this approach outperforms the state-of-the-art on four benchmark datasets, validating the relevance of using other uncertainty measures in practical applications.
\end{abstract}

\label{sec:intro}
Learning how to act from historical data is a largely studied field in machine learning \cite{strehl2010learning,dudik2011doubly,li2011unbiased,li2015counterfactual}, spanning a wide range of applications where a system interacts with its environment (e.g search engines, ad-placement and recommender systems). Interactions are materialized by the actions taken by the system, themselves rewarded by a feedback measuring their relevance. Both quantities can be logged at little cost, and subsequently used to improve the performance of the learning system. The Batch Learning from Bandit Feedback \cite{swaminathan2015batch,swaminathan2015self}  (BLBF) framework describes such a situation, where a \emph{contextual} decision making process must be improved based on the logged history of \emph{implicit} feedback observed only on a subset of actions. Counterfactual estimators \cite{bottou2013counterfactual} allow to forecast the performance of any system from the logs, as if it was taking the actions by itself. This enables the search for an optimal system, even with observations biased towards actions favored by the logger. 

A natural approach to carry out this search consists in favoring systems that select actions with high empirical counterfactual rewards. However, this initiative can be rather burdensome as it suffers a crucial caveat intimately linked with a phenomenon known as the optimizer's curse \cite{capen1971competitive,smith2006optimizer,thaler2012winner}. It indicates that sorting actions by their empirical reward average can be sub-optimal since the resulting expected \emph{post-decision surprise} is non-zero. In real life applications where the space of possible actions is often extremely large and where decisions are taken based on very low sample sizes, the consequence of this phenomenon can be dire and motivates the design of principled \emph{robust} solutions. As a solution for this Counterfactual Risk Minimization (CRM) problem, the authors of \cite{swaminathan2015batch} proposed a modified action selection process, penalizing behaviors resulting in high-variance estimates.

In this paper, we argue that another line of reasoning resides in using Distributional Robust Optimization (DRO) for the CRM. It has indeed proven to be a powerful tool both in decision theory \cite{duchi2016,esfahani2018data,blanchet2016} and the training of robust classifiers \cite{madry2017towards,Xiao2018GeneratingAE,hu2018does}. Under the DRO formulation, one treats the empirical distribution with skepticism and hence seeks a solution that minimizes the worst-case expected cost over a family of distributions, described in terms of an uncertainty ball. Using distributionally robust optimization, one can therefore control the magnitude of the post-decision surprise, critical to the counterfactual analysis.  

We motivate the use of DRO for the CRM problem with asymptotic guarantees and bring to light a formal link between the variance penalization solution of~\cite{swaminathan2015batch} and a larger DRO problem for which the uncertainty set is defined with the chi-square divergence. Building from this, we propose the use of other uncertainty sets and introduce a KL-based formulation of the CRM problem. We develop a new algorithm for this objective and benchmark its performance on a variety of real-world datasets. We analyze its behavior and show that it outperforms existing state-of-the-art methods.

The structure of the paper is the following: in Section~\ref{sec:blbf} we formally introduce the BLBF framework and the CRM problem. In
Section~\ref{sec:dro} we present the DRO framework, motivate it for CRM and re-derive the POEM \cite{swaminathan2015batch} algorithm as one of its special cases and introduce a new CRM algorithm. In Section~\ref{sec:exp} we compare this new algorithm with state-of-the-art CRM algorithms on four public datasets and finally summarize our findings in Section~\ref{sec:conclusion}.

\section{Batch Learning from Logged Bandit Feedback}
\label{sec:blbf}

\subsection{Notation and terminology}
We denote $x\in\mX$ arbitrary \emph{contexts} drawn from an unknown distribution $\nu$ and presented to the decision maker. Such a quantity can describe covariate information about a patient for a clinical test, or a potential targeted user in a recommender system. The variable $y\in\mY$ denotes the \emph{actions} available to the decision maker - the potential medications to give to the patient, or possible advertisements targeting the user for instance. A \emph{policy} is a mapping $\pi:\mX\to\mP\left(\mY\right)$ from the space of contexts to probabilities in the action space. For a given (context, action) pair $(x,y)$, the quantity $\pi(y\vert x)$ denotes the probability of the policy $\pi$ to take the action $y$ when presented with the context $x$. When picking the action $y$ for a given context $x$, the decision maker receives a \emph{reward} $\delta(x,y)$, drawn from an unknown distribution. In our examples, this reward could indicate a patient's remission, or the fact that the targeted user clicked on the displayed ad. This reward $\delta(x,y)$ can also be assumed to be deterministic - as in \cite{swaminathan2015batch, swaminathan2015batch, swaminathan2015self}. We make this assumption in the rest of this manuscript. Finally, for a given context $x \in \mathcal{X}$ and an action $y \in \mathcal{Y}$, we define the cost function $c(x,y) \triangleq - \delta(x,y)$.

In this paper, we try to find policies producing low expected costs. To make this search tractable, it is usual to restrict the search to a family of \emph{parametric} policies, henceforth tying policies $\pi_\theta$ to a vector $\theta\in\Theta$. The $\emph{risk}$ $R(\theta) \triangleq \bE_{x\sim \nu, y\sim \pi_\theta(\cdot\vert x)}\left[ c(x,y) \right] $of the policy $\pi_\theta$ corresponds to the expected cost obtained by the policy $\pi_\theta$, a quantity the decision maker will try to minimize.

\subsection{Counterfactual Risk Minimization}
\label{subsec:crm}
In practical applications, it is common that one has only access to the \emph{interaction logs} of a previous version of the decision making system, also called a \emph{logging policy} (denoted $\pi_0$). More formally, we are interested in the case where the only available data is a collection of quadruplets $\mH\triangleq \left(x_i ,y_i, p_i, c_i\right)_{1\leq i \leq n}$, where the costs $c_i \triangleq c(x_i, y_i)$ were obtained after taking an action $y_i$ with probability $p_i\triangleq \pi_0(y_i\vert x_i)$ when presented with a context $x_i \sim \nu$.

In order to search for policies $\pi_\theta$ with smaller risk than $\pi_0$, one needs to build counterfactual estimators for $R(\theta)$ from the historic $\mH$. One way to do so is to use \emph{inverse propensity scores} \cite{rosenblum1983central}:
\begin{eqnarray}
    R(\theta) &= \bE_{x\sim \nu, y\sim \pi_\theta(x)}\left[ c(x,y) \right]
                    =  \bE_{x \sim \nu, y \sim \pi_0(x)}\left[ c(x,y) \frac{\pi_\theta(y\vert x)}{\pi_0(y\vert x)} \right],
\label{eq:ips_risk} 
\end{eqnarray}
for any $\pi_\theta$ absolutely continuous w.r.t $\pi_0$.
Henceforth, $R(\theta)$ can readily be approximated with samples $(x_i, y_i, p_i,c_i)$ from the interaction logs $\mH$ via the sample average approximation:
\begin{equation}
    R(\theta) \simeq \frac{1}{n}\sum_{i=1}^n c_i\frac{\pi_\theta(y_i\vert x_i)}{p_i}.
    \label{eq:ips_empirical_risk}
\end{equation}

Bluntly minimizing the objective provided by the counterfactual risk estimator \eqref{eq:ips_empirical_risk} is known to be sub-optimal, as it can have \emph{unbounded} variance \cite{ionides2008truncated}. It is therefore a classical technique \cite{bottou2013counterfactual,cortes2010learning,strehl2010learning,swaminathan2015batch} to \emph{clip} the propensity weights . This leads to the Clipped Inverse Propensity Scores (CIPS) estimator:
\begin{align}
    \hat{R}_n(\theta) \triangleq \frac{1}{n}\sum_{i=1}^n c_i \min\left(M, \frac{\pi_\theta(y_i\vert x_i)}{p_i}\right).
    \label{eq:empirical_clipped}
\end{align}
The variable $M$ is an hyper-parameter, balancing the variance reduction brought by weight clipping and the bias introduced in the empirical estimation of $R(\theta)$. The search for a minimal $\theta$ with respect to the estimator $\hat{R}_n^M(\theta)$ is often referred to as \emph{Counterfactual Risk Minimization} (CRM). 

\paragraph{Remark 1} Other techniques have been proposed in the literature to reduce the variance of the risk estimators. For instance, the doubly robust risk \cite{dudik2011doubly} takes advantage of both counterfactual estimators and supervised learning methods, while the self-normalized risk \cite{swaminathan2015self} was designed to counter the effect of a phenomenon known as propensity overfitting. We do not explicitly cover them in our analysis, but the results we derive hereinafter also hold for such estimators.


\subsection{Sample-Variance Penalization}

The main drawback of the CIPS estimator is that two different policies can have risk estimates of highly different variance - something the sample average approximation cannot capture. The authors of \cite{swaminathan2015batch} developed a variance-sensitive action selection process, penalizing policies with high-variance risk estimates. Their approach is based on a sample-variance penalized version of the CIPS estimator:
\begin{align}
    \hat{R}_n^{\lambda} (\theta) \triangleq \hat{R}_n(\theta) + \lambda \sqrt{V_n(\theta)/n},
    \label{eq:poem_obj}
\end{align}
where $\lambda$ is an hyper-parameter set by the practitioner, and $V_n(\theta)$ denotes the empirical variance of the quantities $c_i \min\left(M, \frac{\pi_\theta(y_i \vert x_i)}{p_i}\right)$. The main motivation behind this approach is based on confidence bounds derived in \cite{maurer09}, upper-bounding with high-probability the true risk $R(\theta)$ by the empirical risk $\hat{R}_n(\theta)$ augmented with an  additive empirical variance term. In a few words, this allows to build and optimize a pessimistic envelope for the true risk and penalize policies with high variance risk estimates. The authors of \cite{swaminathan2015batch} proposed the Policy Optimization for Exponential Models (POEM) algorithm and showed state-of-the-art results on a collection of counterfactual tasks when applying this method to exponentially parametrized policies:
\begin{eqnarray}
    \pi_\theta(y\vert x) \propto \exp\left( \theta^T \phi(x, y)\right),
    \label{eq:exponential_model}
\end{eqnarray}
with $\phi(x, y)$ a $d$-dimensional joint feature map and $\Theta$ a subset of $\mathbb{R}^d$.

\section{Distributionally Robust Counterfactual Risk Minimization}
\label{sec:dro}


\subsection{Motivating distributional robustness for CRM}

For more concise notations, let us introduce the variable $\xi=(x,y)$, the distribution $P=\nu\otimes \pi_0$ and the loss $\ell(\xi,\theta)\triangleq c(x,y)\min\left(M, \frac{\pi_\theta(y \vert x)}{\pi_0(y \vert x)}\right)$. The minimization of the counterfactual risk now writes $\theta^\star \triangleq \argmin_{\theta\in\Theta}\bE_{\xi\sim P}\left[\ell(\xi,\theta)\right]$ and its empirical counterpart $\hat{\theta}_n \triangleq \argmin_{\theta\in\Theta}\hat{R}_n(\theta)$. The consistency of the estimator $\hat{\theta}_n$ holds under general conditions and the empirical risk converges to the optimal true risk \cite{vapnik1992principles}:
\begin{equation}
     \mathbb E_{\xi \sim  P}[\ell(\xi; \theta^\star)] - \hat{R}_n(\hat{\theta}_n) \underset{n\to\infty}{\longrightarrow} 0
\end{equation}
However, one of the major drawbacks of this approach is that the empirical risk $\hat{R}_n(\theta)$ cannot be used as a \emph{performance certificate} for the true risk $\mathbb E_{\xi \sim P}[\ell(\xi;\theta)]$. Indeed, one will fail at controlling the true risk of any parameter $\theta$ since by the Central Limit Theorem:

\begin{equation}\label{eq:erm_drawback}
    \lim_{n \to \infty}\bP\left(\bE_{\xi \sim  P}[\ell(\xi;\theta)] \le \hat{R}_n(\theta)\right) = 1/2
\end{equation}

One way to circumvent this limitation is to treat the empirical distribution $\Pn$ with \emph{skepticism} and to replace it with an uncertainty set $\mU_\epsilon(\Pn)$ of distributions around $\Pn$ with $\epsilon > 0$ a parameter controlling the size of the uncertainty set $\mathcal U_\epsilon(\Pn)$. This gives rise to the \emph{distributionally robust} counterfactual risk:
\begin{align}
\Rrob{\mU}{\theta, \epsilon} \triangleq \max_{Q\in\mU_\epsilon(\Pn)} \bE_{\xi \sim Q}[\ell(\xi;\theta)].
\label{eq:robust_risk}
\end{align}
Minimizing this quantity w.r.t to $\theta$ yields the general DRO program: 
\begin{equation}
\begin{aligned}
  \Trob{} &\triangleq \argmin_{\theta \in \Theta} \Rrob{\mU}{\theta, \epsilon} \\&= \underset{\theta \in \Theta}{\argmin} \ \max_{Q \in \mathcal U_\epsilon(\Pn)}\mathbb E_{\xi \sim Q}[\ell(\xi;\theta)].
    \label{eq:dro}
\end{aligned}
\end{equation}

There is liberty on the way to construct the uncertainty set $\mathcal U_\epsilon(\Pn)$ including parametric \cite{madry2017towards,Xiao2018GeneratingAE} and non-parametric designs \cite{parys17,sinha2018,blanchet2016}. Moreover, for well chosen uncertainty sets \cite{duchi2016}, one can prove performance guarantees asserting that asymptotically (in the limit $n \rightarrow \infty$), for all $\theta\in\Theta$:
$$
\begin{aligned}
 &\bE_{\xi \sim  P}[\ell(\xi;\theta)] \le \Rrob{\mU}{\theta, \epsilon_n} \,\text{w.h.p}\\ \text{and} \quad
   &\bE_{\xi \sim  P}[\ell(\xi; \theta)] - \Rrob{\mU}{\theta, \epsilon_n} \rightarrow 0
\end{aligned}
$$
The robust risk therefore acts as a consistent certificate on the true risk. We believe that these properties alone are enough to motivate the use of DRO for the CRM problem, as it provides an elegant way to \emph{design consistent asymptotic upper bounds for the true risk} and ensure a small post-decision surprise. We detail such guarantees in the next subsection. Later, we draw links between DRO and the POEM, showing that a wide variety of DRO problems \emph{account for the empirical variance} of the samples, therefore mitigating the limitations of empirical averages as discussed in Section \ref{subsec:crm}.

\subsection{Guarantees of robustified estimators with $\varphi$-divergences}\label{section:3.2}
We are interested in DRO instances that are amenable to direct optimization. To this end, we focus here only on uncertainty sets $\mathcal U_\epsilon(\Pn)$ based on information divergences \cite{csiszar1967information}, since they strike a nice compromise between ease of implementation and theoretical guarantees that will reveal useful for the CRM problem. The use of information divergences for DRO has already been largely studied in several works (for example \cite{duchi2016,gotoh2018}). For the sake of completeness, we now recall in Definition~\ref{def:inf_divergences} the definition of information divergences.

\begin{definition}[$\varphi$-divergences]
  Let $\varphi$ be a real-valued, convex function such that $\varphi(1)=0$. For a reference distribution $P$, the
  \textit{divergence} of another distribution $Q$ with respect to $P$ is defined by
  \begin{align}
    D_\varphi(Q\|P) \triangleq \begin{cases}\int\varphi(dQ/dP)dP,&\mbox{ if }Q \ll
      P,\\+\infty,&\mbox{ else.}\end{cases}.
  \end{align}
  \label{def:inf_divergences}
\end{definition}

Subsequently, the definition of the uncertainty set $\mU_\epsilon$ relies only on $\varphi$-divergences as follows: 
\begin{align}
    \mathcal U_{\epsilon}(\hat{P}_n) = \left\{Q \mid D_\varphi(Q\|\Pn) \le \epsilon \right\}.
\end{align}

We need to ensure that the set of $\varphi$-divergences used to define the resulting robust risk $\Rrob{^{\varphi}}{\theta, \epsilon}$ satisfies some basic \emph{coherence} properties. We therefore make further assumptions about the measure of risk $\varphi$ to narrow the space of information divergences we consider: 

\begin{ass}[Coherence]
 $\varphi$ is a real-valued function satisfying:
\begin{itemize}
    \item $\varphi$ is convex and lower-semi-continuous
    \item $\varphi(t) = \infty$ for $t < 0$, and $\varphi(t) \ge \varphi(1)=0$, \, $\forall t\in\mathbb R$ 
    \item $\varphi$ is twice continuously differentiable at $t=1$ with
    $\varphi'(1)=0$ and $\varphi''(1)>0$
\end{itemize}
\label{ass:coherence}
\end{ass}

The axioms presented in Assumption~\ref{def:inf_divergences} have been proposed and studied extensively in \cite{rockafellar2018}. 
Examples of coherent divergences include the Chi-Square, Kullback-Leibler divergences and the squared Hellinger distance. 

Before stating the announced asymptotic guarantees, we make further assumptions on the structure of both the context and parameter spaces.

\begin{ass}[Structure] $\Theta$ and $\mathcal{X}$ respectively check:
    \begin{itemize} 
        \item $\Theta$ is a compact subset of some $\mathbb R^d$.
        \item $\mathcal{X}$ is a compact subset of some $\mathbb{R}^D$.
    \end{itemize}
  \label{ass:structure}
\end{ass}

We now state Lemma~\ref{lemma:asymptotic_results} which provides asymptotic certificate guarantees for the robust risk, asymptotically controlling the true counterfactual risk $\bE_{\xi \sim  P}[\ell(\xi;\theta)]$ with high probability. It is easy to show that under Assumptions~\ref{ass:coherence} and \ref{ass:structure}, Lemma~\ref{lemma:asymptotic_results} can be obtained by a direct application of Proposition~1 of \cite{duchi2016}. The full proof of this result is provided in Appendix~\ref{app:proof_lemma1}.

\begin{lemma}[Asymptotic guarantee - Proposition 1 of \cite{duchi2016}] For a fixed level of confidence $\delta \in (0, 1]$, we have that $\forall \theta \in \Theta$:
\begin{equation}
    \underset{n \to \infty}{\lim} \ \ \mathbb{P} \left( \mathbb E_{\xi \sim  P} \left[ \ell(\xi;\theta) \right] \le \Rrob{^{\varphi}}{\theta, \epsilon_n} \right) \ge 1-\delta
\end{equation}
where $\epsilon_n$ is defined by the $(1-\delta)$ Chi-Squared quantile, $\epsilon_n(\delta) = \chi^2_{1, 1-\delta}/n$.
\label{lemma:asymptotic_results}
\end{lemma}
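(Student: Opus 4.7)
The plan is to derive an asymptotic expansion of the robust risk via Lagrangian duality and then read off the claimed confidence guarantee from the Central Limit Theorem. First I would reformulate the inner supremum using convex duality: under Assumption~\ref{ass:coherence}, standard $\varphi$-divergence duality gives
\begin{equation*}
\Rrob{^{\varphi}}{\theta, \epsilon} = \inf_{\lambda \ge 0,\, \eta \in \mathbb{R}} \left\{ \lambda \epsilon + \eta + \lambda \, \bE_{\xi \sim \Pn}\!\left[ \varphi^*\!\left( \frac{\ell(\xi;\theta) - \eta}{\lambda} \right) \right] \right\},
\end{equation*}
where $\varphi^*$ is the Fenchel conjugate of $\varphi$. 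This replaces the inner maximization over the distributional uncertainty ball by a scalar minimization amenable to asymptotic analysis as $\epsilon_n \to 0$.

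Next I would expand $\varphi^*$ to second order at zero: the coherence assumption implies, via Fenchel duality, that $\varphi^*(s) = s + s^2/(2\varphi''(1)) + o(s^2)$. Substituting this in the dual, the linear part merges with the $\eta$ term, leaving a variance-like quadratic whose $\eta$-optimizer concentrates at $\hat{R}_n(\theta)$ and whose $\lambda$-optimizer scales as $\sqrt{V_n(\theta)/(2\epsilon_n\varphi''(1))}$. The dual value then collapses to
\begin{equation*}
\Rrob{^{\varphi}}{\theta, \epsilon_n} = \hat{R}_n(\theta) + \sqrt{\frac{2 \epsilon_n V_n(\theta)}{\varphi''(1)}} + o_{\bP}\!\left( n^{-1/2} \right).
\end{equation*}
Plugging in $\epsilon_n(\delta) = \chi^2_{1,1-\delta}/n$ turns this into a one-sided empirical-variance envelope of width $\sqrt{\chi^2_{1,1-\delta} \cdot 2 V_n(\theta)/(n\varphi''(1))}$. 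By the Central Limit Theorem, $\sqrt{n}(\hat{R}_n(\theta) - \bE_{\xi \sim P}[\ell(\xi;\theta)])/\sqrt{V_n(\theta)}$ converges to a standard Gaussian whose square is $\chi^2_1$; since $\chi^2_{1,1-\delta}$ is the $(1-\delta)$-quantile of that law, the envelope dominates the true risk with probability at least $1-\delta$ in the limit.

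The hard part is making the formal expansion rigorous: one must control the Taylor remainder of $\varphi^*$ uniformly over the arguments arising near the dual optimum, so that the $o_{\bP}(n^{-1/2})$ term genuinely is negligible against the $\sqrt{\epsilon_n V_n(\theta)}$ leading term, and one must also justify interchanging $\inf$ with limits. Assumption~\ref{ass:structure} (compactness of $\Theta$ and $\mathcal{X}$) combined with the propensity clipping in the definition of $\ell$ ensures that the loss is bounded, which keeps $(\ell(\xi;\theta) - \eta)/\lambda$ in a compact range where the Taylor expansion is controlled. All of the delicate quantile asymptotics are precisely the content of Proposition~1 of \cite{duchi2016}; the proof plan is essentially to verify that our coherence and structure assumptions imply the regularity hypotheses required by that result, and then apply it directly for fixed $\theta \in \Theta$.
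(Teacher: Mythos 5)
Your proposal is correct and ultimately takes the same route as the paper: both reduce the statement to Proposition~1 of \cite{duchi2016} after checking that the coherence and structure assumptions supply the required regularity (your duality/Taylor-expansion/CLT sketch is essentially what that proposition encapsulates, and your observation that clipping plus compactness bounds the loss is the key input). The only substantive content of the paper's own proof that you gloss over is the explicit verification step: it constructs a Lipschitz envelope $M(\xi)=\max_{\theta\in\Theta}\lVert\nabla_\theta f(\xi,\theta)\rVert$ for the exponential policy class and checks $\mathbb{E}[M(\xi)^2]<\infty$ and $\mathbb{E}[\ell(\xi,\theta)^2]<\infty$ using compactness of $\Theta$ and $\mathcal{X}$, which is exactly the ``verify the regularity hypotheses'' step you defer to.
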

\begin{proof}
The proof is deferred to Appendix~\ref{app:proof_lemma1}. It mainly consists in showing that under Assumptions~\ref{ass:coherence} and \ref{ass:structure}, the conditions for applying the Proposition 1 of \cite{duchi2016} are fulfilled.
\end{proof}

For the CRM problem, this result is of upmost importance as it allows us to control the post-decision surprise suffered for a given policy $\pi_\theta$ and allows for a pointwise control of the true risk.  

\paragraph{Remark 2} A stronger result than Lemma \ref{lemma:asymptotic_results} would control with high probability the true risk of the robustified policy $\mathbb E_{\xi \sim  P}[ \ell(\xi;\Trob{})]$ with the optimal value $\Rrob{\mU}{\Trob{}, \epsilon_n}$. 
It is easy to see that, if $P \in \mathcal{U}_\epsilon$, then $\mathbb E_{\xi\sim P} [\ell(\xi;\Trob{})] \le \Rrob{\mU}{\Trob{}, \epsilon_n}$, hence $\mathbb{P}(\mathbb E_{\xi\sim P} [\ell(\xi;\hat{\theta}_n)] \le \Rrob{\mU}{\Trob{}, \epsilon_n}) \ge \mathbb{P}(P \in \mathcal{U}_\epsilon)$. By exhibiting strong rates of convergence of the empirical distribution $\hat{P}_n$ towards the true distribution $P$, such a result could be reached. Under mild assumptions on $P$, this guarantee has been proved in \cite{esfahani2017} for Wasserstein based uncertainty sets. In our current case where $\mathcal{U}_\epsilon$ is defined by information divergences this result holds solely under the assumption that $P$ is finitely supported \cite{van2017data}, a plausible situation 
when the logging policy is defined on a finite number of (context, action) pairs. 




\subsection{Equivalences between DRO and SVP}
\label{subsec:svpisdro}
In this subsection, we focus on stressing the link between DRO and sample variance penalization schemes as used in the POEM algorithm. In Lemma~\ref{lemma:asymptotic_expansion}, we present an asymptotic equivalence between the robust risk (defined with coherent $\varphi$ divergences) and the SVP regularization used in POEM. This Lemma is a specific case of existing results, already detailed in \cite{duchi2016,namkoong2017variance} and \cite{gotoh2017,gotoh2018}.



\begin{lemma} [Asymptotic equivalence - Theorem 2 of \cite{duchi2016}] 
\label{lem:asymptotic_equivalence}
Under Assumptions~\ref{ass:coherence} and \ref{ass:structure}, for any $\epsilon \ge 0$, integer $n > 0$ and $\theta \in \Theta$ we have: 
\begin{equation}
  \begin{aligned}
    \Rrob{^{\varphi}}{\theta, \epsilon_n} = \hat{R}_n(\theta) + \sqrt{\epsilon_n V_n(\theta)} + \alpha_n(\theta),
  \end{aligned}
\end{equation}
with $\sup_{\theta}\sqrt{n}|\alpha_n(\theta)| \overset{P}{\longrightarrow}
0$ and $\epsilon_n = \epsilon/n$.
\label{lemma:asymptotic_expansion}
\end{lemma}
\begin{proof}
The proof is rather simple, as one only needs to show that the assumptions behind Theorem 2 of \cite{duchi2016} are satisfied. For the sake of completeness, this analysis is carried on in Appendix~\ref{app:proof_lemma2}.
\end{proof}
This expansion gives intuition on the practical effect of the DRO approach: namely, it states that the minimization of the robust risk $\Rrob{^{\varphi}}{\theta}$ based on coherent information divergences is \emph{asymptotically equivalent} to the POEM algorithm. This link between POEM and DRO goes further: the following Lemma states that sample-variance penalization is an \emph{exact} instance of the DRO problem when the uncertainty set is based on the chi-square divergence.

\begin{lemma}[Non-asymptotic equivalence] Under Assumption \ref{ass:structure} and for $\chi^2$-based uncertainty sets, for any $\epsilon \ge 0$ small enough, integer $ n > 0$ and $\theta \in \Theta$ we have: 
\begin{equation}
  \begin{aligned}
    \Rrob{^{\chi^2}}{\theta, \epsilon}  = \hat{R}_n(\theta) + \sqrt{\epsilon V_n(\theta)}.
  \end{aligned}
  \label{eq:svp_is_dro}
\end{equation}
\label{lemma:svp_is_dro}
\end{lemma}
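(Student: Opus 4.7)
The strategy is to solve the inner supremum in closed form by reducing it to a finite-dimensional convex problem on the simplex and applying Cauchy--Schwarz.

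First, I would exploit the fact that $\hat P_n=\tfrac{1}{n}\sum_{i=1}^n\delta_{\xi_i}$ is purely atomic, so any $Q\ll\hat P_n$ is characterized by weights $q_i:=Q(\{\xi_i\})$ in the simplex. Setting $w_i:=nq_i$ (the Radon--Nikodym derivative $dQ/d\hat P_n$ at $\xi_i$) and using $\varphi(t)=(t-1)^2$, the constraint $D_{\chi^2}(Q\|\hat P_n)\le\epsilon$ becomes $\sum_i(w_i-1)^2\le n\epsilon$. Writing $\ell_i:=\ell(\xi_i;\theta)$, the robust risk equals
\[
\Rrob{^{\chi^2}}{\theta,\epsilon}=\max_{w\in\mathbb R^n}\Big\{\tfrac{1}{n}\sum_i w_i\ell_i \ \Big|\ \sum_i(w_i-1)^2\le n\epsilon,\ \sum_i w_i=n,\ w_i\ge 0\Big\}.
\]
The centering change of variable $u_i:=w_i-1$ then yields the constraints $\sum_i u_i=0$, $\sum_i u_i^2\le n\epsilon$, $u_i\ge-1$, and the objective rewrites as $\hat R_n(\theta)+\tfrac{1}{n}\sum_i u_i\bigl(\ell_i-\hat R_n(\theta)\bigr)$, where the sample mean subtraction comes for free from $\sum_i u_i=0$.

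Next, a single Cauchy--Schwarz step gives
\[
\sum_i u_i\bigl(\ell_i-\hat R_n(\theta)\bigr)\le\Big(\sum_i u_i^2\Big)^{1/2}\Big(\sum_i\bigl(\ell_i-\hat R_n(\theta)\bigr)^2\Big)^{1/2}\le\sqrt{n\epsilon}\cdot\sqrt{nV_n(\theta)},
\]
so $\Rrob{^{\chi^2}}{\theta,\epsilon}\le\hat R_n(\theta)+\sqrt{\epsilon V_n(\theta)}$. For the matching lower bound I would exhibit the explicit candidate $u_i^\star=\sqrt{\epsilon/V_n(\theta)}\,(\ell_i-\hat R_n(\theta))$ (the degenerate case $V_n(\theta)=0$ is immediate): it obeys $\sum_i u_i^\star=0$, saturates the chi-square constraint with equality, and attains equality in Cauchy--Schwarz.

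The one delicate point, and the main obstacle, is verifying the nonnegativity $u_i^\star\ge-1$, i.e.\ that the unconstrained Cauchy--Schwarz maximizer remains in the simplex. I would address this either by (a) restricting to the regime $\epsilon\cdot\max_i(\hat R_n(\theta)-\ell_i)_+^2\le V_n(\theta)$, under which the $w_i\ge 0$ constraint is inactive at the optimum, or (b) invoking Lagrangian strong duality for this convex quadratic program and checking that the KKT multipliers attached to the $u_i\ge-1$ constraints vanish at the optimum. Modulo this nonnegativity audit, the Cauchy--Schwarz arithmetic is entirely routine and delivers the stated exact equality.
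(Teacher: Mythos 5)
Your primal route is genuinely different from the paper's and, on the key technical point, more careful. The paper works entirely in the dual: it writes the supremum as $\inf_{\gamma\ge 0}\gamma\epsilon+\sup_Q\left(\mathbb E_Q[Z]-\gamma D_\varphi(Q\|\hat P_n)\right)$, invokes Proposition~3.1 of Gotoh et al.\ to reduce the inner problem to $\inf_{c\in\mathbb R}\left(c+\gamma\mathbb E_{\hat P_n}[\varphi^*((Z-c)/\gamma)]\right)$ with $\varphi^*(z)=z+z^2$, cites their Theorem~3.2 to place the optimum at $c=0$, and then minimizes $\gamma\epsilon+V_n(\theta)/\gamma$ over $\gamma$. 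You instead reduce the problem to a finite-dimensional quadratic program over the simplex and solve it by Cauchy--Schwarz, exhibiting the maximizing weights explicitly. Both arrive at the same closed form; yours is self-contained and makes the worst-case distribution visible, while the paper's duality machinery generalizes more readily to other $\varphi$ (it is the same template reused for the KL case in Lemma~\ref{lemma:kl_dro}).

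The ``delicate point'' you flag is genuine, and it is a defect of the statement rather than of your argument: the exact identity $\Rrob{^{\chi^2}}{\theta,\epsilon}=\hat R_n(\theta)+\sqrt{\epsilon V_n(\theta)}$ holds only when your candidate $u_i^\star$ satisfies $u_i^\star\ge -1$, i.e.\ when $\epsilon\,\max_i\bigl(\hat R_n(\theta)-\ell_i\bigr)_+^2\le V_n(\theta)$; outside this regime the nonnegativity constraints bind, the Cauchy--Schwarz maximizer is infeasible, and only the inequality ``$\le$'' survives for arbitrary $\epsilon\ge 0$. The paper's proof conceals the same issue: the convex conjugate of the nonnegativity-restricted $\chi^2$ generator is quadratic only on part of its domain and affine below it, and using $\varphi^*(z)=z+z^2$ globally amounts to silently discarding the constraint $Q\ge 0$ --- precisely the step you isolate. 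To close your proof, either establish the unconditional upper bound plus equality under the stated regime, or note (as in Namkoong and Duchi) that the constraint is inactive with high probability when $\epsilon=O(1/n)$, which is the scaling under which the lemma is actually used.
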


The proof is detailed in Appendix \ref{app:proof_lemma3}. In the case of $\chi^2$-based uncertainty sets, the expansion of Lemma~\ref{lemma:asymptotic_expansion} holds non-asymptotically, and the POEM algorithm can be interpreted as the minimization of the distributionally robust risk $\Rrob{^{\chi^2}}{\theta, \epsilon}$. To the best of our knowledge, it is the first time that a connection is drawn between counterfactual risk minimization with SVP \cite{swaminathan2015batch} and DRO.

\subsection{Kullback-Leibler based CRM}
\label{subsec:kl_crm}
Among information divergences, we are interested in the ones that allow tractable optimization. Going towards this direction, we investigate in this subsection the robust counterfactual risk generated by Kullback-Leibler (KL) uncertainty sets and stress its efficiency. The KL divergence is a \emph{coherent} $\varphi$-divergence, with $\varphi(z) \triangleq z\log(z)+z-1$ for $z>0$ and $\varphi(z)=\infty$ elsewhere. The robust risk $\Rrob{^{\text{KL}}}{\theta, \epsilon}$ therefore benefits from the guarantees of Lemma~\ref{lemma:asymptotic_results}. Furthermore, it enjoys a simple analytic formula stated in Lemma~\ref{lemma:kl_dro} that allows for direct optimization. For conciseness reasons, the proof of this Lemma is deferred to Appendix~\ref{app:kl_dro_proof}.

\begin{lemma}[Kullback-Leibler Robustified Counterfactual Risk]
\label{lemma:kl_dro} Under Assumption \ref{ass:structure}, the robust risk defined with a Kullback-Leibler uncertainty set can be rewritten as:
\begin{align}
    \Rrob{^{\text{\upshape KL}}}{\theta, \epsilon} &= \inf_{\gamma > 0} \left( \gamma\epsilon + \gamma\log\mathbb E_{\xi \sim \Pn}[\exp(\ell(\xi;\theta)/\gamma)] \right) 
    \\  &=\mathbb{E}_{\xi\sim \Pn^{\gamma^\star}(\theta)}[\ell(\xi;\theta)].
  \label{eq:kl-dro}
\end{align}
where $\Pn^{\gamma}$ denotes the Boltzmann distribution at temperature $\gamma > 0$, defined by
$$
\Pn^\gamma(\xi_i|\theta) = \frac{\exp(\ell(\xi_i;
  \theta)/\gamma)}{\sum_{j=1}^n\exp(\ell(\xi_j; \theta)/\gamma)}.
$$
\end{lemma}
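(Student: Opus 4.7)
}

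The plan is to obtain the first identity by strong Lagrangian duality on the constrained convex program defining $\Rrob{^{\text{KL}}}{\theta,\epsilon}$, and then to read off the Boltzmann form of the optimizer from the KKT stationarity conditions, which will yield the second identity.

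First, I would use the fact that $\hat P_n$ is supported on the $n$ points $\xi_1,\dots,\xi_n$ and that any $Q$ with $D_{\mathrm{KL}}(Q\|\hat P_n)<\infty$ is absolutely continuous with respect to $\hat P_n$, hence supported on the same atoms. Writing $q_i = Q(\{\xi_i\})$, the robust risk becomes the finite-dimensional convex program
\begin{equation*}
\Rrob{^{\text{KL}}}{\theta,\epsilon}
= \max_{q\in\Delta_n}\ \sum_{i=1}^n q_i\,\ell(\xi_i;\theta)
\quad\text{subject to}\quad \sum_{i=1}^n q_i\log(n q_i)\le\epsilon,
\end{equation*}
where $\Delta_n$ is the probability simplex. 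Note that under Assumption~\ref{ass:structure} and the clipping by $M$, the losses $\ell(\xi_i;\theta)$ are bounded, so the objective is continuous on the compact feasible set and the maximum is attained.

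Next, I would form the Lagrangian with multiplier $\gamma>0$ for the KL constraint and $\eta\in\mathbb R$ for the normalization $\sum_i q_i=1$. Slater's condition holds as soon as $\epsilon>0$, since the uniform assignment $q_i=1/n$ gives $D_{\mathrm{KL}}(\hat P_n\|\hat P_n)=0<\epsilon$, so strong duality applies and
\begin{equation*}
\Rrob{^{\text{KL}}}{\theta,\epsilon}
= \inf_{\gamma>0}\sup_{q\ge 0,\ \sum q_i=1}
\Bigl\{\textstyle\sum_i q_i\ell(\xi_i;\theta) - \gamma\sum_i q_i\log(nq_i) + \gamma\epsilon\Bigr\}.
\end{equation*}
The inner supremum is exactly the Gibbs variational form of $\gamma\log\mathbb E_{\hat P_n}[\exp(\ell(\cdot;\theta)/\gamma)]$ (equivalently, the Donsker--Varadhan formula applied to the finite measure $\hat P_n$), which would establish the first claimed identity
\begin{equation*}
\Rrob{^{\text{KL}}}{\theta,\epsilon}
= \inf_{\gamma>0}\Bigl(\gamma\epsilon + \gamma\log\mathbb E_{\xi\sim\hat P_n}[\exp(\ell(\xi;\theta)/\gamma)]\Bigr).
\end{equation*}

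For the second identity, I would compute the maximizer of the inner problem by setting $\partial_{q_i}$ of the Lagrangian to zero; this yields $q_i \propto \exp(\ell(\xi_i;\theta)/\gamma)$, i.e.\ $q_i=\hat P_n^{\gamma}(\xi_i|\theta)$. Plugging $\gamma=\gamma^\star$ (any minimizer of the outer problem; existence on $(0,\infty)$ follows from the convexity in $\gamma$ of $\gamma\mapsto\gamma\log\mathbb E[\exp(\ell/\gamma)]$ and boundedness of $\ell$) and using complementary slackness (at $\gamma^\star>0$ the KL constraint is active, giving $D_{\mathrm{KL}}(\hat P_n^{\gamma^\star}\|\hat P_n)=\epsilon$), one recovers $\Rrob{^{\text{KL}}}{\theta,\epsilon}=\mathbb E_{\xi\sim\hat P_n^{\gamma^\star}(\theta)}[\ell(\xi;\theta)]$.

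The main obstacle is justifying that the outer infimum is attained on $(0,\infty)$ rather than at the boundary. I would address this by noting that boundedness of $\ell$ implies $\gamma\log\mathbb E_{\hat P_n}[\exp(\ell/\gamma)]\to \mathbb E_{\hat P_n}[\ell]$ as $\gamma\to\infty$ (and hence the objective blows up), while as $\gamma\to 0^+$ it converges to $\max_i\ell(\xi_i;\theta)$, so for $\epsilon$ strictly smaller than $\log n - \text{entropy gap}$, the infimum is attained at some finite $\gamma^\star>0$. A minor technicality is the edge case where the losses are constant in $i$, for which $\gamma^\star$ is not unique but any choice yields $\hat P_n^{\gamma^\star}=\hat P_n$ and the identity is trivial.
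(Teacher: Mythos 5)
Your proof is correct, and both it and the paper's argument rest on the same principle: strong duality for the KL-constrained maximization. The execution differs. The paper invokes the general dual representation of $\varphi$-divergence DRO with an auxiliary shift variable $c$ (Equation \eqref{equation_proposition31}, via the envelope theorem of \cite{rockafellar2013fundamental}), computes the conjugate $\varphi^*_{\mathrm{KL}}(z)=e^z-1$, and then eliminates $c$ by an explicit inner minimization to reach the log-moment-generating-function form; you instead reduce to a finite-dimensional convex program on the simplex, apply Lagrangian duality directly with an explicit Slater check, and recognize the inner supremum as the Donsker--Varadhan/Gibbs variational formula --- your normalization multiplier plays exactly the role of the paper's $c$. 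What your route buys is self-containedness and a cleaner second identity: the paper obtains the Boltzmann form by differentiating in $\gamma$ and substituting the resulting fixed point, which tacitly assumes the outer infimum is attained in the interior of $(0,\infty)$, whereas you identify the worst-case distribution from KKT stationarity, conclude via complementary slackness, and explicitly flag the boundary cases (large $\epsilon$ making a near-point-mass feasible, and constant losses) that both arguments would otherwise gloss over. What the paper's route buys is uniformity: the conjugate-based dual applies verbatim to every coherent $\varphi$-divergence, and is the same template reused for the $\chi^2$ case in Lemma~\ref{lemma:svp_is_dro}.
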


A direct consequence of Lemma~\ref{lemma:kl_dro} is that the worst-case distribution in the uncertainty set (defined by the KL divergence) takes the form of a Boltzmann distribution. Henceforth, minimizing the associated robust risk is equivalent with the optimization of the following objective:

\begin{align}
   \Rrob{^{\text{\upshape KL}}}{\theta} = \frac{\sum_{i=1}^n \ell(\xi_i;
  \theta)\exp(\ell(\xi_i; \theta)/\gamma^\star)}{\sum_{j=1}^n \exp(\ell(\xi_j; \theta)/\gamma^\star)}.
\label{eq:kl_objective}
\end{align}

From an optimization standpoint this amounts to replacing the empirical distribution of the logged data with a Boltzmann adversary which re-weights samples in order to put more mass on hard examples (examples with high cost).

In what follows, we call \firstalgo the algorithm minimizing the objective \eqref{eq:kl_objective} while treating $\gamma^\star$ as a hyper-parameter that controls the hardness of the re-weighting. A small value for $\gamma^\star$ will lead to a conservative behavior that will put more weight on actions with high propensity cost. In the limit when $\gamma^\star \to 0$, the robust risk only penalizes the action with highest propensity re-weighted cost. On the other end, a very large $\gamma^\star$ brings us back in the limit to the original CIPS estimator where the samples have equal weights. In a naive approach, this parameter can be determined through \emph{cross-validation} and kept constant during the whole optimization procedure.

Lemma~\ref{lemma:gamma} goes further into the treatment of the optimal temperature parameter $\gamma^*$ and provides an \emph{adaptive} rule for updating it during the robust risk minimization procedure.

\begin{lemma}
    The value of the optimal temperature parameter $\gamma^*$ can be approximated as follows:
    \begin{align}
        \gamma^* \approx \sqrt{\frac{V_n(\theta)}{2\epsilon}}.
    \end{align}
    \label{lemma:gamma}
\end{lemma}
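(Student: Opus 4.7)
The plan is to use the variational formula from Lemma~\ref{lemma:kl_dro} and apply a second-order cumulant expansion of the moment generating function of the loss to reduce the one-dimensional infimum over $\gamma$ to a tractable problem whose stationary point is $\gamma^\star$ in closed form.

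First, I would start from the dual representation
$$\Rrob{^{\text{KL}}}{\theta,\epsilon} = \inf_{\gamma>0} F(\gamma), \qquad F(\gamma) \triangleq \gamma\epsilon + \gamma\log\mathbb E_{\xi\sim \Pn}[\exp(\ell(\xi;\theta)/\gamma)].$$
Write the scalar variable $L=\ell(\xi;\theta)$ and its cumulant generating function $K(t)=\log\mathbb E_{\Pn}[e^{tL}]$. The standard Taylor expansion around $t=0$ gives $K(t) = t\hat R_n(\theta) + \tfrac{1}{2}t^2 V_n(\theta) + O(t^3)$. Substituting $t=1/\gamma$ yields
$$\gamma K(1/\gamma) \;=\; \hat R_n(\theta) + \frac{V_n(\theta)}{2\gamma} + O\!\left(\frac{1}{\gamma^2}\right),$$
so that $F(\gamma) = \gamma\epsilon + \hat R_n(\theta) + V_n(\theta)/(2\gamma) + O(1/\gamma^2)$.

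Next, I would differentiate the leading terms and solve the first-order condition. Dropping the remainder, $F'(\gamma)=\epsilon - V_n(\theta)/(2\gamma^2)$, which vanishes precisely at $\gamma^\star=\sqrt{V_n(\theta)/(2\epsilon)}$, giving the announced approximation. As a sanity check, plugging this value back into $F$ recovers $\Rrob{^{\text{KL}}}{\theta,\epsilon} \approx \hat R_n(\theta) + \sqrt{2\epsilon V_n(\theta)}$, which matches the generic expansion of Lemma~\ref{lemma:asymptotic_expansion} specialized to the KL divergence (for which $\varphi''(1)=1$), and is consistent with the $\chi^2$ result in Lemma~\ref{lemma:svp_is_dro} up to the divergence-dependent factor.

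The main obstacle is to justify the truncation of the cumulant expansion. At the candidate optimum, $1/\gamma^\star = \sqrt{2\epsilon/V_n(\theta)}$ is small in the regime of interest for CRM (moderate uncertainty radius $\epsilon$ and non-degenerate empirical variance), which is precisely the regime of Lemma~\ref{lemma:asymptotic_results}. Under Assumption~\ref{ass:structure}, the compactness of $\Theta$ and $\mathcal X$ together with the clipping constant $M$ in the definition of $\ell$ ensure that $\|\ell(\cdot;\theta)\|_\infty$ is uniformly bounded, so all higher-order cumulants of $L$ are bounded uniformly in $\theta$ and the $O(1/\gamma^2)$ remainder in $F$ is controlled by a constant times $1/\gamma^2$ with constant independent of $\theta$. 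A fully rigorous version would then quantify this by applying the implicit function theorem to the exact stationary equation $\epsilon + K(1/\gamma) - K'(1/\gamma)/\gamma = 0$ around the perturbative solution $\sqrt{V_n(\theta)/(2\epsilon)}$, and checking that the induced correction is of lower order as $\epsilon\to 0$, which is the precise content of the $\approx$ sign in the statement.
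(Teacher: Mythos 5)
Your proposal is correct and follows essentially the same route as the paper's own proof: a second-order Taylor expansion of the log moment-generating function $\Phi(\alpha)=\log\mathbb{E}_{\hat{P}_n}[e^{Z\alpha}]$ around $\alpha=0$, substitution of $\alpha=1/\gamma$ into the dual formula of Lemma~\ref{lemma:kl_dro}, and solving the resulting approximate first-order condition $\epsilon - V_n(\theta)/(2\gamma^2)=0$. Your additional remarks on controlling the remainder via boundedness of the clipped loss and on consistency with Lemmas~\ref{lemma:asymptotic_expansion} and~\ref{lemma:svp_is_dro} go slightly beyond what the paper writes down, but the core argument is identical.
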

The proof relies on a second-order Taylor approximation of the log moment-generating function of the loss, more precisely, of the convex one-dimensional function and is deferred to Appendix~\ref{app:gamma_proof}. This results implies that $\gamma^*$ should be updated concurrently to the parameter $\theta$ during the minimization of the robustified risk \eqref{eq:kl_objective}. This leads to an algorithm we call adaptive \firstalgo, or aKL-CRM. Pseudo-code for this algorithm is provided in Algorithm~\ref{algo:adaptive_kl_crm}. As for POEM and \firstalgo, its hyper-parameter $\epsilon$ can be determined through cross-validation. 

\begin{algorithm}
\caption{\secondalgo}
\label{algo:adaptive_kl_crm}
\SetKwBlock{Sampling}{\mdseries \textit{(Sampling)}}{}
\SetKwBlock{CMA}{\mdseries\textit{(CMA-ES iteration)}}{}
\SetKwBlock{GNN}{\mdseries\textit{(NICE iteration)}}{}
\SetKwBlock{PEN}{\mdseries\textit{(Penalty update)}}{}
\SetKwComment{Comment}{//}{}
\SetKwInOut{Input}{inputs}
\SetKwInput{HP}{hyper-parameters}
\SetKwInOut{Output}{output}
\Input{
  $\mathcal H=\{(x_1,y_1,p_1,c_1),\ldots,(x_n,y_n,p_n,c_n)\}$, parametrized
  family of policies $\pi_\theta$}
\HP{clipping constant $M$, uncertainty set size $\epsilon$}
\Repeat{convergence}
{
\textbf{compute} the counterfactual costs $z_i\leftarrow c_i\min(M,\pi_\theta(y_i\vert x_i)/p_i)$ for $i=1,\hdots,n$\\
\textbf{compute} the optimal temperature $\gamma^\star \leftarrow \sqrt{\sum_{j=1}^n\left(z_i-\bar{z}\right)^2/(2\epsilon)}$, where $\bar{z} = \sum_{j=1}^n z_i/n$\\
\textbf{compute} the normalized costs $s_i \leftarrow e_i/\sum_{j=1}^n e_j$ 
for $i=1,\hdots,n$, where $e_i = e^{z_i/\gamma^\star}$\\
\textbf{compute} the re-weighted loss $L \leftarrow \sum_{i=1}^n z_is_i$\\
\textbf{update} $\theta$ by applying an L-BFGS step to the loss $L$
}
\end{algorithm}

\section{Experimental results}
\label{sec:exp}

It is well known that experiments in the field of counterfactual reasoning are highly sensitive to differences in datasets and implementations. Consequently, to evaluate and compare the two algorithms we previously introduced to existing solutions, we rigorously follow the experimental procedure introduced in \cite{swaminathan2015batch} and used in several other works - such as \cite{swaminathan2015self} since then. It relies on a supervised to unsupervised dataset conversion \cite{agarwal2014taming} to build bandit feedback from multi-label classification datasets. As in \cite{swaminathan2015batch}, we train exponential models
$$
\pi_\theta(y\vert x) \propto \exp\left(\theta^T\phi(x,y)\right)
$$ for the CRM problem and use the same datasets taken from the LibSVM repository. For reproducibility purposes, we used the code provided by its authors \footnote{\url{http://www.cs.cornell.edu/~adith/POEM/index.html}} for all our experiments.

\subsection{Methodology} 
For any multi-label classification tasks, let us note $x$ the input features and $y^\star\in\{0,1\}^q$ the labels. The full supervised dataset is denoted $\mathcal{D}^\star\triangleq \left\{(x_1,y_1^\star),\hdots, (x_N, y_N^\star)\right\}$, and is split into three parts: $\mathcal{D}^\star_{\text{train}}$, $\mathcal{D}^\star_{\text{valid}}$, $\mathcal{D}^\star_{\text{test}}$. For every of the four dataset we consider (Scene, Yeast, RCV1-Topics and TMC2009), the split of the training dataset is done as follows: 75\% goes to $\mathcal{D}^\star_{\text{train}}$ and 25\% to $\mathcal{D}^\star_{\text{valid}}$. The test dataset $\mathcal{D}_{\text{test}}$ is provided by the original dataset. As in \cite{swaminathan2015batch}, we use joint features maps $\phi(x,y) = x\otimes y$ and train a Conditional Random Field \cite{lafferty2001conditional} (CRF) on a fraction (5\%, randomly constituted) of $\mathcal{D}^\star_{\text{train}}$. This CRF has access to the full supervised feedback and plays the role of the logging policy $\pi_0$. That is, for every $x_i\in \mathcal{D}^\star$, a label prediction $y_i$ is sampled from the CRF with probability $p_i$. The quality of this prediction is measured through the Hamming loss: $c_i = \sum_{l=1}^q \vert y_l - y_l^\star\vert$. The logged bandit dataset is consequently generated by running this policy through $\mathcal{D}^\star_{\text{train}}$ for $\Delta=4$ times ($\Delta$ is the \emph{replay count}). After training, the performances of the different policies $\pi$ are reported as their expected Hamming loss on the held-out set $\mathcal{D}^\star_{\text{test}}$. Every experiment is run 20 times with a different random seed (which controls the random training fraction for the logging policy and the creation of the bandit dataset). 

For each dataset we compare our algorithm with the naive CIPS estimator and the POEM. For all four algorithms (CIPS, POEM, \firstalgo, \secondalgo), the numerical optimization routine is deferred to the L-BFGS algorithm. As in \cite{swaminathan2015batch}, the clipping constant $M$ is always set
to the ratio of the 90\%ile to the 10\%ile of the propensity scores observed in logs $\mathcal{H}$. Other hyper-parameters are selected by cross-validation on $\mathcal{D}^\star_{\text{valid}}$ with the unbiased counterfactual estimator \eqref{eq:ips_empirical_risk}. In the experimental results, we also report the performance of the logging policy $\pi_0$ on the test set as an indicative baseline measure, and the performance of a skyline CRF trained on the whole supervised dataset, despite of its unfair advantage.

\subsection{Results}

\begin{table}[t]
    \centering
    \begin{tabular}{c?c|c|c|c?}
         & \textbf{Scene} & \textbf{Yeast} & \textbf{RCV1-Topics} & \textbf{TMC2009}   \\
         \Xhline{1.5pt}
         $\pi_0$ & 1.529 & 5.542 & 1.462 & 3.435\\
         \hline
         \hline
         CIPS & 1.163 & 4.658 & 0.930 & 2.776\\
         POEM & 1.157 & \textbf{4.535}& 0.918 & 2.191\\
         \firstalgo & 1.146 & 4.604 & 0.922 & 2.136\\
         \secondalgo & \textbf{1.128} & \textbf{4.553} & \textbf{0.783}& \textbf{2.126} \\
         \hline
         \hline
         CRF & 0.646 & 2.817& 0.341 & 1.187
    \end{tabular}
    \caption{Expected Hamming loss on $\mathcal{D}^*_{\text{test}}$ for the different algorithms, averaged over 20 independent runs. Bold font indicate that one or several algorithms are statistically better than the rest, according to a one-tailed paired difference t-test at significance level of 0.05.}
    \label{tab:results}
\end{table}

Table~\ref{tab:results} reports the expected Hamming loss of the policies obtain with different algorithms on the Scene, Yeast, RCV1-Topics and TMC2007 dataset, averaged on 20 random seeds. The results reported for the baselines are coherent with \cite{swaminathan2015batch}. On each dataset, \secondalgo comes out at one of the best algorithm (according to a one-tailed paired difference t-test at significance level 0.05) and outperforming the POEM baseline on three out of four datasets. The results for \firstalgo are more mitigated: it outperforms POEM on two datasets, but shows weaker performance on the two others clearly stressing the efficiency of an adaptive temperature parameter. 

As in \cite{swaminathan2015batch}, we can further evaluate the quality of the learned policies by evaluating the Hamming loss of their \emph{greedy} version (selecting only the arm that is attributed the most probability mass by the policy). This comes at less expense that sampling from the policy as this does not require to compute the normalizing constant in \eqref{eq:exponential_model}. These results are reported in Table~\ref{tab:greedy_results}, and are consistent with the conclusions of Table~\ref{tab:results}. One can note that the improvement brought by \secondalgo over POEM is even sharper under this evaluation. 

Another experiment carried in \cite{swaminathan2015batch} focuses on the size of the bandit dataset $\mathcal{H}$. This quantity can be easily modulated by varying the \emph{replay count} $\Delta$ - the number of times we cycle through $\mathcal{D}^\star_{\text{train}}$ to create the logged feedback $\mH$. Figure~\ref{fig:rep_count} reports the expected Hamming loss of policies trained with the POEM, \firstalgo and \secondalgo algorithms for different value of $\Delta$, ranging from 1 to 256, and based on the Yeast and Scene datasets. Results are averaged over 10 independent runs. For large values of $\Delta$ (that is large bandit dataset) all algorithms seem to confound; this is to be expected as Lemma \ref{lem:asymptotic_equivalence} states that for any coherent $\varphi$-divergences, all robust risks are asymptotically equivalent. It also stands out that for small replay counts (i.e the small data regime, a more realistic case), the KL-based algorithms outperform POEM.

\begin{table}
    \centering
    \begin{tabular}{c?c|c|c|c?}
         & \textbf{Scene} & \textbf{Yeast} & \textbf{RCV1-Topics} & \textbf{TMC2009}   \\
         \Xhline{1.5pt}

         CIPS & 1.163 & 4.369 & 0.929 & 2.774\\
         POEM & 1.157 & \textbf{4.261}& 0.918 & 2.190\\
         \firstalgo & 1.146 & 4.316 & 0.922 & 2.134\\
         \secondalgo & \textbf{1.128} & \textbf{4.271} & \textbf{0.779}& \textbf{2.034}
    \end{tabular}
    \caption{Hamming loss on $\mathcal{D}^*_{\text{test}}$ for the different greedy policies, averaged over 20 independent runs. Bold font indicates that one or several algorithms are statistically better than the rest, according to a one-tailed paired difference t-test at significance level of 0.05.}
    \label{tab:greedy_results}
\end{table}

\begin{figure}[t]
    \centering
    \begin{subfigure}{0.45\linewidth}
        \includegraphics[width=\linewidth]{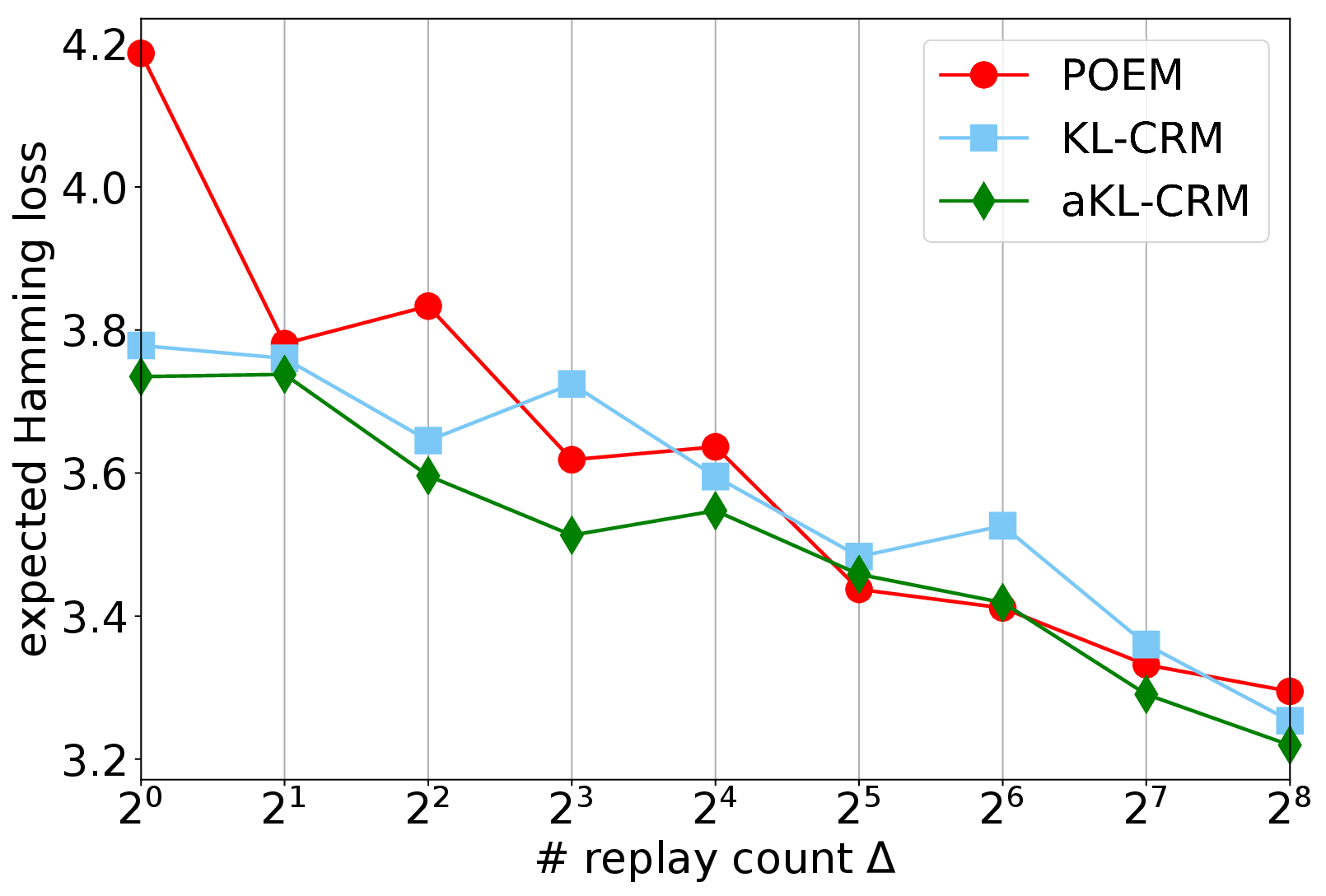}
        \caption{Yeast dataset}
    \end{subfigure}
    \begin{subfigure}{0.45\linewidth}
        \includegraphics[width=\linewidth]{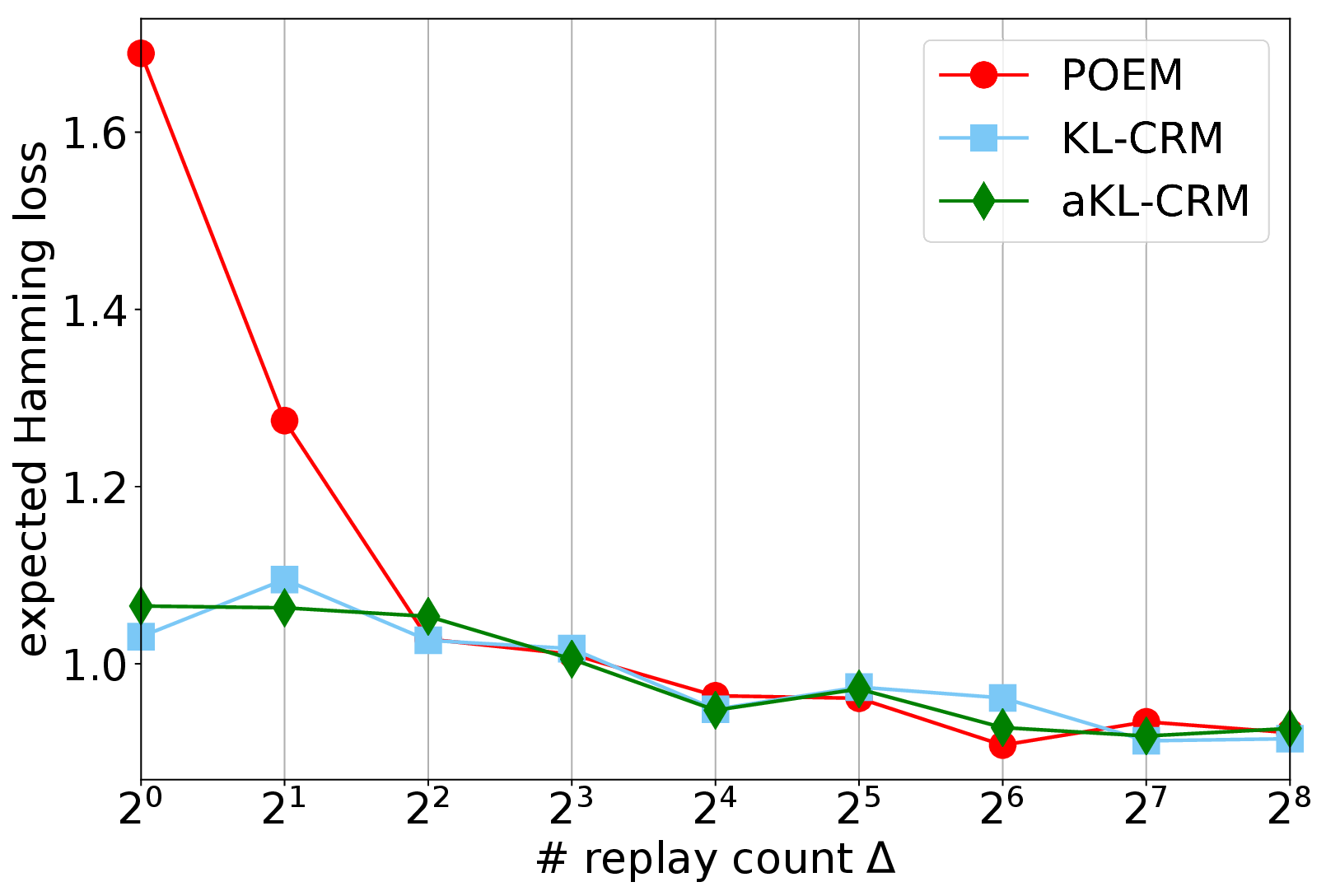}
        \caption{Scene dataset}
    \end{subfigure}
    \caption{Impact of the replay count $\Delta$ on the expected Hamming loss. Results are average over 10 independent runs, that is 10 independent train/test split and bandit dataset creation. KL-CRM and aKL-CRM outperform POEM in the small data regime.}
    \label{fig:rep_count}
\end{figure}

\section{Conclusion}
\label{sec:conclusion}
We presented in this work a unified framework for counterfactual risk minimization based on the distributionally robust optimization of policies and motivated it by asymptotic guarantees available when the uncertainty measure is based on $\varphi$-divergences. We showed that this new framework generalizes existing solutions, like sample-variance penalized counterfactual risk minimization algorithms \cite{swaminathan2015batch}. Our work therefore opens a new avenue for reasoning about counterfactual optimization with logged bandit feedback as we showed that a KL-divergence based formulation of the counterfactual DRO problem can lead to tractable and efficient algorithms for the CRM problem, outperforming state-of-the-art results on a collection of datasets. 

The authors of \cite{swaminathan2015batch} also proposed a modification to the POEM algorithm that can be optimized at scale using stochastic gradient descent. Future work should therefore aim at developing stochastic optimization schemes for both \firstalgo and \secondalgo so that they could handle large datasets. From the perspective of experimental evaluation, measuring the impact of the DRO formulation on the doubly robust \cite{dudik2011doubly} and the self-normalized \cite{swaminathan2015self} estimators would further validate its relevance for real world problems. Finally, a more theoretical line of work could focus on proving finite-samples performance certificate guarantees for distributionally robust estimators based on coherent $\varphi$-divergences, further motivating their use for counterfactual risk minimization. 

\bibliography{biblio.bib}
\bibliographystyle{plain}

\newpage
\appendix

\section{Proof of Lemma \ref{lemma:asymptotic_results}} \label{app:proof_lemma1}

To prove this lemma, we will need the following result:
\begin{lemma}
    \label{lemma:assumptions_duchi}
    Under Assumption \ref{ass:structure}, we have that:
    \begin{itemize}
        \item There exists a measurable function $M: \mX\times\mY \to \mathbb{R}^+$ such that for all $\xi \in \mX\times \mY, \ell(\xi, \cdot)$ is $M(\xi)$-Lipschitz with respect to some norm $||.||$ in $\Theta$.
        \item $\mathbb{E}_{\xi \sim P} \left[M(\xi)^2\right] < \infty$ and $\mathbb{E}_{\xi \sim P}\left[\ell(\xi, \theta)^2 \right] < \infty$ for $\theta \in \Theta$.
    \end{itemize}
     where the distribution $P=\nu\otimes \pi$ and the loss $\ell(\xi,\theta)\triangleq c(x,y)\min\left(M, \frac{\pi_\theta(y \vert x)}{\pi_0(y \vert x)}\right)$ and $\xi=(x,y)$.
\end{lemma}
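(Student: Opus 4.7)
The plan is to combine compactness of $\mathcal{X}$ and $\Theta$ with the smooth exponential parametrization of $\pi_\theta$ and the clipping inside $\ell$ to produce explicit, uniformly bounded Lipschitz constants. To avoid a notation clash with the Lipschitz function $M(\xi)$, I will write $\bar M$ for the clipping threshold and set $r_\theta(x,y) \triangleq \pi_\theta(y\vert x)/\pi_0(y\vert x)$.

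First, I would use continuity of $c$ and $\phi$ together with compactness of $\mathcal{X}$ (and finiteness of the action set $\mathcal{Y}$, implicit in the BLBF setting and in the normalization of exponential policies) to obtain uniform bounds $\sup_{x,y}\|\phi(x,y)\| \le C_\phi$ and $\sup_{x,y}|c(x,y)| \le C_c$. Since the clipped ratio satisfies $\min(\bar M, r_\theta) \le \bar M$, one immediately gets $|\ell(\xi,\theta)| \le \bar M C_c$, whence $\mathbb{E}_{\xi\sim P}[\ell(\xi,\theta)^2] \le (\bar M C_c)^2 < \infty$, which takes care of half of the second item.

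For the Lipschitz bound, the key calculation is the standard exponential-family identity
\[
\nabla_\theta \log r_\theta(x,y) \;=\; \phi(x,y) - \mathbb{E}_{y'\sim\pi_\theta(\cdot\vert x)}[\phi(x,y')],
\]
which yields $\|\nabla_\theta r_\theta(x,y)\| \le 2 C_\phi\, r_\theta(x,y)$. Composing with the $1$-Lipschitz scalar map $u\mapsto \min(\bar M, u)$, the Clarke subdifferential of $\theta\mapsto \min(\bar M, r_\theta(x,y))$ is bounded by $2\bar M C_\phi$ everywhere: on $\{r_\theta < \bar M\}$ one uses $r_\theta \le \bar M$ to control $\|\nabla r_\theta\|$; on $\{r_\theta > \bar M\}$ the subdifferential is $\{0\}$; and at the kink $r_\theta = \bar M$ one takes a convex combination of the two. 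Integrating along line segments (on a convex neighbourhood of the compact $\Theta$) then upgrades this pointwise bound into a global Lipschitz property, and multiplying by the cost gives the measurable constant
\[
M(\xi) \;\triangleq\; 2\bar M C_\phi\,|c(x,y)|,
\]
which trivially satisfies $\mathbb{E}_{\xi\sim P}[M(\xi)^2] \le (2\bar M C_\phi C_c)^2 < \infty$.

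The only delicate point is the non-differentiability of $\min(\bar M, \cdot)$ at its kink, which I would handle either via Clarke subdifferentials as sketched, or by approximating $\min$ by a smooth soft-min and passing to the limit, both of which preserve the same $2\bar M C_\phi$ bound. Everything else reduces to uniform boundedness of $\phi$ and $c$ on the compact product $\mathcal{X}\times\mathcal{Y}$, as guaranteed by Assumption~\ref{ass:structure} together with the exponential-family structure of $\pi_\theta$.
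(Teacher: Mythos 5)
Your proof is correct, and it shares the paper's basic skeleton --- reduce the Lipschitz claim to a gradient bound for the exponential model over the compact parameter set --- but the execution differs in a way worth noting. The paper first discards the clipping, bounding $\vert\ell(\xi,\theta_1)-\ell(\xi,\theta_2)\vert\le\vert f(\xi,\theta_1)-f(\xi,\theta_2)\vert$ with $f(\xi,\theta)=c(x,y)\,\pi_\theta(y\vert x)/\pi_0(y\vert x)$ the \emph{unclipped} loss, and then sets $M(\xi)=\max_{\theta\in\Theta}\lVert\nabla_\theta f(\xi,\theta)\rVert$, invoking continuity of this max-function and compactness of $\mX$ to get square-integrability. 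You instead keep the clipping inside the gradient bound: combining $\lVert\nabla_\theta r_\theta\rVert\le 2C_\phi\, r_\theta$ with the fact that the clipped map has zero (sub)gradient wherever $r_\theta$ exceeds the threshold, you obtain the explicit \emph{uniform} constant $M(\xi)=2\bar M C_\phi\vert c(x,y)\vert$. This buys two things: integrability of $M(\xi)^2$ becomes trivial (no appeal to continuity of a parametric maximum), and the bound does not degrade with $\max_\theta \pi_\theta(y\vert x)/\pi_0(y\vert x)$ --- the paper's $M(\xi)$ scales with that quantity and hence implicitly requires the propensities $\pi_0(y\vert x)$ to be controlled, which is precisely the pathology clipping is meant to cure and which your argument sidesteps. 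Your treatment of the kink (Clarke subdifferentials or smoothing) and of the mean-value step on a convex neighbourhood of $\Theta$ is also more careful than the paper's, which tacitly assumes $\Theta$ convex when passing from a gradient bound to a Lipschitz constant. The one caveat you share with the paper is the implicit finiteness (or compactness, with continuity of $\phi$ and $c$) of $\mY$ needed for the uniform bounds $C_\phi$ and $C_c$; you at least flag it explicitly, while the paper's proof relies on it silently.
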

\begin{proof}
We recall that the policy space is restricted to exponential policies such that $\pi_\theta(y\vert x) \propto \exp\left( \theta^T \phi(x, y)\right)$ and that under Assumption \ref{ass:structure} the parameter space $\Theta$ is compact. Let $f(\xi, \theta) \triangleq c(x,y) \frac{\pi_\theta(y \vert x)}{\pi_0(y \vert x)}$. We have that:
\begin{align*}
     \left\vert\ell(\xi,\theta_1)-\ell(\xi,\theta_2)\right\vert \leq \left\vert f(\xi,\theta_1)-f(\xi,\theta_2)\right\vert, \quad \forall \theta_1, \theta_2\in\Theta
\end{align*}
Note that $f(\xi,\cdot)$ is continuous derivable on $\Theta$ compact. We note:
\begin{align*}
    M(\xi) \triangleq \underset{\theta \in \Theta}{\max} \ \lVert\nabla_\theta f(\xi, \theta)\rVert \end{align*}
$f(\xi,\cdot)$ is therefore $M(\xi)$-Lipschitz in $\theta$ w.r.t. $||.||$ and, consequently so is $\ell(\xi,\cdot)$. As the function $M$ is continuous, by boundedness of the context space $\mathcal{X}$ (Assumption \ref{ass:structure}), we have $\mathbb{E}_{\xi \sim P} [M(\xi)^2] < \infty$ and $\mathbb{E}_{\xi \sim P} [\ell(\xi, \theta)^2] < \infty$ for $\theta \in \Theta$.
\end{proof}

Under Assumptions \ref{ass:coherence} and \ref{ass:structure} and using the result in Lemma \ref{lemma:assumptions_duchi}, we now verify all the assumptions of Proposition 1 in \cite{duchi2016}. Lemma \ref{lemma:asymptotic_results} is consequently a direct application of the latter.

\paragraph{Remark} Note that this result does not apply only to exponentially parametrized policies. Indeed, sufficient conditions for Lemma \ref{lemma:assumptions_duchi} would be milder - we only used that $\pi_\theta$ is continuously derivable w.r.t. $\theta$.

\section{Proof of Lemma \ref{lemma:asymptotic_expansion}}\label{app:proof_lemma2}

Under Assumptions \ref{ass:coherence} and \ref{ass:structure} and using Lemma \ref{lemma:assumptions_duchi}, Lemma \ref{lemma:asymptotic_expansion} is a direct application of Theorem 2 of in \cite{duchi2016}.

\section{Proof of Lemma \ref{lemma:svp_is_dro}}\label{app:proof_lemma3}

The line of proof is similar to the proof of Theorem 3.2 of \cite{gotoh2018}. To ease notations, we denote $Z\triangleq \ell(\xi,\theta)$. Let's consider $\varphi$ a coherent information divergence and $\varphi^*(z) \triangleq \underset{t>0}{\sup} \ zt-\varphi(t)$ its \textit{convex conjugate}. By strong duality:
\begin{eqnarray*}
    \sup_{D_\varphi(Q\|\hat{P}_n) \le \epsilon}\mathbb E_{Q}[Z] =
    \inf_{\gamma \ge 0}\gamma\epsilon + \sup_Q \ (\mathbb
    E_{Q}[Z]-\gamma D_\varphi(Q\|\hat{P}_n))
\end{eqnarray*}
The Envelope Theorem of \cite{rockafellar2018} states that:
$$
\begin{aligned}
\sup_Q \ (\mathbb
    E_{Q}[Z]-\gamma D_\varphi(Q\|\hat{P}_n))& = \inf_{c \in \mathbb R} (c + \gamma \mathbb E_{\hat{P}_n}[\varphi^*((Z-c)/\gamma)])
\end{aligned}
$$
Hence we obtain that:
$$\begin{aligned}
    \sup_{D_\varphi(Q\|\hat{P}_n) \le \epsilon}\mathbb E_{Q}[Z] = &\inf_{\gamma \ge 0}\gamma\epsilon + \inf_{c \in \mathbb R}(c + \gamma \mathbb E_{\hat{P}_n}[\varphi^*((Z-c)/\gamma)])
\label{equation_proposition31}
\end{aligned}$$

In the specific case of the modified $\chi^2$-divergence, let $\varphi(z)= (z-1)^2$ and its convex conjugate is $\varphi^\star(z) = z + z^2/4$ for $z>-2$. Assume for now that 
$\Big\{(Z-c)/\gamma>-2\Big\}$ holds almost surely. 
Solving 
$$\inf_{c \in \mathbb R}c + \gamma \mathbb E_{\hat{P}_n}\left[\frac{Z-c}{\gamma}+\frac{(Z-c)^2}{4\gamma^2}\right]$$ leads to:
$$
\begin{aligned}
    \sup_{D_\varphi(Q\| \hat{P}_n) \le \epsilon}\mathbb E_{Q}[Z] &= \mathbb E_{\hat{P}_n}[Z] + \inf_{\gamma \ge
    0}\left(\gamma\epsilon + \frac{1}{4\gamma} V_n(Z) \right)\\
  &= \mathbb E_{\hat{P}_n}[Z] + \sqrt{\epsilon V_n(Z)}
\end{aligned}
$$
This yields the announced result in Lemma~\ref{lemma:svp_is_dro}. One can check that the minimum (w.r.p to $\gamma$) of $\gamma\epsilon + \frac{1}{4\gamma} V_n(Z)$ is attained at $\gamma^*=\frac{1}{2}\sqrt{V_n(Z)/\epsilon}$. Therefore for $\epsilon$ small enough, $\gamma^*$ is big enough to ensure that indeed the event $\Big\{(Z-c)/\gamma^*>-2\Big\}$ holds almost surely (with respect to the measure $P_n$).

\section{Proof of Lemma~\ref{lemma:kl_dro}}

\label{app:kl_dro_proof}

To ease notations, we denote $Z\triangleq \ell(\xi,\theta)$. As in the proof of Lemma~\ref{lemma:svp_is_dro}, one can obtain that:
$$
\begin{aligned}
    \max_{KL\left(Q \vert\vert \hat{P}_n\right) \le
  \epsilon}\mathbb E_Q&[\ell(\xi;\theta)] = \inf_{\gamma \ge 0}\Big\{\gamma\epsilon  + \inf_{c \in \mathbb R}\left(c +
    \gamma\mathbb E_{\hat{P}_n}[\varphi^*_{\text{KL}}((Z-c)/\gamma)]\right)\Big\}
\end{aligned}
$$
Since $\varphi_{\text{KL}}(z) = z\log z -z +1$, it is a classical convex analysis exercise to show that its convex conjugate is $\varphi^*_\text{KL}(z)=e^z -1$. Therefore:
$$
\begin{aligned}
\max_{KL\left(Q \vert\vert \hat{P}_n\right) \le
  \epsilon}\mathbb E_Q&[\ell(\xi;\theta)] = \inf_{\gamma \ge 0}\Big\{\gamma\epsilon  +
  \inf_{c \in \mathbb R}\left(c +
    \gamma\mathbb E_{\hat{P}_n}[e^{(Z-c)/\gamma}-1]\right)\Big\}
\end{aligned}$$
Solving 
$$
\inf_{c \in \mathbb R}\left(c +
    \gamma\mathbb E_{\hat{P}_n}[e^{(Z-c)/\gamma}-1]\right)
$$
is straightforward and leads to:
\begin{align}
    \max_{KL\left(Q \vert\vert \hat{P}_n\right) \le
  \epsilon}\mathbb E_Q[\ell(\xi;\theta)] = \inf_{\gamma \ge 0}\left\{\gamma\epsilon + \gamma\log\mathbb
  E_{\hat{P}_n}\left[e^{Z/\gamma}\right]\right\}
  \label{eq:solving_for_gamma}
\end{align}
Differentiating the r.h.s and setting to 0 shows that the optimal $\gamma$ is solution to the fixed-point equation:
\begin{align}
\gamma = \frac{\mathbb E_{\hat{P}_n^\gamma}[Z]}{\epsilon +
  \log(\mathbb E_{\hat{P}_n}[e^{Z/\gamma}])} 
\end{align}
where $d\hat{P}_n^\gamma(z) := (1/\mathbb E_{\hat{P}_n}[e^{z/\gamma}])e^{z/\gamma}d\hat{P}_n(z)$
is the density of the Gibbs distribution at temperature $\gamma$ and state
degeneracies $\hat{P}_n$. Replacing this value for $\gamma$ in the r.h.s of \eqref{eq:solving_for_gamma} yields the announced result. This formula has also been obtained in \cite{hu2013kullback}, using another line of proof.

\section{Proof of Lemma~\ref{lemma:gamma}}
\label{app:gamma_proof}
Using notations from Appendix~\ref{app:kl_dro_proof}, consider the log moment generating function :
\begin{align}
    \Phi:\alpha \to \log\bE_{\Pn}\left[e^{Z\alpha}\right]
\end{align}
well defined as $\Pn$ has finite support and $Z$ is bounded a.s. It checks the following equalities:
$$
\begin{aligned}
    \Phi(0) &= 0 \\
    \Phi'(0) &= \bE_{\Pn}\left[Z\right]\\
    \Phi''(0) &= V_n(Z)
\end{aligned}
$$
and a second-order Taylor expansion around 0 yields:
$$
\Phi(\alpha) = \alpha \bE_{\Pn}\left[Z\right] + \frac{\alpha^2}{2}V_n(Z) + o_{0}(\alpha^2)
$$
With $\alpha=1/\gamma$ and injecting this result in the r.h.s of Equation~\eqref{eq:solving_for_gamma} yields:
\begin{align}
    \max_{KL\left(Q \vert\vert \hat{P}_n\right) \le
  \epsilon}\mathbb E_Q[\ell^M(\xi;\theta)] = \inf_{\gamma \ge 0}\left\{\gamma\epsilon + \bE_{\Pn}\left[Z\right]  +\frac{V_n(Z)}{2\gamma}+o_{\infty}(1/\gamma)\right\}
\end{align}
Solving (approximately) the r.h.s of the above equation yields the following relation:
$$
    \gamma \simeq \sqrt{\frac{V_n(Z)}{2\epsilon}}
$$
as announced by Lemma~\ref{lemma:gamma}.

\section{Experimental details}
\label{app:exps}
We hereinafter provide details on the experimental procedure we followed to obtain the results announced in Section~\ref{sec:exp}. This procedure is the one described in \cite{swaminathan2015batch} - we used the code provided by its authors \footnote{\url{http://www.cs.cornell.edu/~adith/POEM/index.html}} for our experiments.

For every of the four dataset we consider (Scene, Yeast, RCV1-Topics and TMC2009), the dataset is randomly split in two parts: 75\% goes to $\mathcal{D}^\star_{\text{train}}$ and 25\% to $\mathcal{D}^\star_{\text{valid}}$. The test dataset $\mathcal{D}_{\text{test}}$ is provided by the original dataset. The logging policy is a CRF is then trained on a fraction ($f$=5\%) of the training dataset $\mathcal{D}^\star_{\text{train}}$. The logged bandit dataset is generated by running this policy through $\mathcal{D}^\star_{\text{train}}$ for $\Delta=4$ times ($\Delta$ is the \emph{replay count}).

As in \cite{swaminathan2015batch,swaminathan2015batch,swaminathan2015self} we train linearly parametrized exponential policies:
$$
    \pi_\theta(y\vert x) = \frac{\exp\left(\theta^T\phi(x,y)\right)}{\mathbb{Z}(x)}
$$
where ${\mathbb{Z}(x)}$ is a normalization constant. We use the same joint feature map $\phi(x,y)=x\otimes y$ with $y$ being a bit vector $\left\{0,1\right\}^L$ where $L$ describes the number of labels of the considered multi-label classification task.

We optimize the objective of each algorithm (CIPS, POEM, \firstalgo, \secondalgo) with the L-BFGS routine. Their respective hyper-parameters are determined via cross-validation on $\mathcal{D}^\star_{\text{valid}}$ with the unbiased estimator \eqref{eq:ips_empirical_risk}. The different ranges used for this cross-validation step are detailed in Table~\ref{tab:hp_range}. 

\begin{table}[h]
    \centering
    \begin{tabular}{c|c|c}
         \textbf{Algorithm} & \textbf{Hyper-parameter} & \textbf{Logarithmic range}  \\
         \hline 
          CIPS & - & - \\
          POEM & $\lambda$ & [-6,0] \\
          \firstalgo & $\gamma$ & [-3,4] \\
          \secondalgo & $\epsilon$ & [-6, 0]
    \end{tabular}
    \caption{Logarithmic ranges for the cross-validation of hyper-parameters.}
    \label{tab:hp_range}
\end{table}

\end{document}